\newtheorem*{proposition}{Proposition}
\title{\LARGE \bf
CoL3D: Collaborative Learning of Single-view Depth and Camera Intrinsics for Metric 3D Shape Recovery}
\author{Chenghao Zhang$^{1}$\thanks{$^{1}$ Alibaba Cloud Computing}, Lubin Fan$^{1*}$, Shen Cao$^{1}$, Bojian Wu$^{2}$\thanks{$^{2}$ Independent Researcher}, and Jieping Ye$^{1}$\thanks{* Corresponding Author}}
\begin{document}

\maketitle
\thispagestyle{empty}
\pagestyle{empty}

\begin{abstract}
Recovering the metric 3D shape from a single image is particularly relevant for robotics and embodied intelligence applications, where accurate spatial understanding is crucial for navigation and interaction with environments. Usually, the mainstream approaches achieve it through monocular depth estimation. However, without camera intrinsics, the 3D metric shape can not be recovered from depth alone. In this study, we theoretically demonstrate that depth serves as a 3D prior constraint for estimating camera intrinsics and uncover the reciprocal relations between these two elements. Motivated by this, we propose a collaborative learning framework for jointly estimating depth and camera intrinsics, named \emph{CoL3D}, to learn metric 3D shapes from single images. Specifically, CoL3D adopts a \textit{unified} network and performs collaborative optimization at three levels: depth, camera intrinsics, and 3D point clouds. For camera intrinsics, we design a canonical incidence field mechanism as a prior that enables the model to learn the residual incident field for enhanced calibration. Additionally, we incorporate a shape similarity measurement loss in the point cloud space, which improves the quality of 3D shapes essential for robotic applications. As a result, when training and testing on \textit{a single dataset} with \textit{in-domain settings}, CoL3D delivers outstanding performance in both depth estimation and camera calibration across several indoor and outdoor benchmark datasets, which leads to remarkable 3D shape quality for the perception capabilities of robots.

\end{abstract}

\section{Introduction}
Recent years have seen significant advancements in understanding 3D scene shapes, particularly in the context of robotics and embodied intelligence~\cite{gothoskar20213dp3,jiang2023self}. For robots to effectively interact with their environments, accurate perception of 3D geometry is essential. Depth sensing serves as a crucial component, providing the distance of each point in the scene from the camera, while camera intrinsics play a vital role in mapping these depths to positions in a 3D space. When combined, these elements enable robots to recover metric 3D scene shapes, fostering enhanced spatial awareness and facilitating various tasks such as navigation, manipulation, and interaction with objects.

Previous works on estimating depth maps or camera intrinsics from a single-view image developed independently along two parallel trajectories. A wave of learning-based methods has promoted the development of the respective tasks, where monocular depth estimation (MDE) primarily focuses on the design of network structures~\cite{adabins,yuan2022neural,li2023depthformer,piccinelli2023idisc,iebins} and single-view camera calibration focuses on the implicit representation of intrinsics~\cite{jin2023perspective,zhu2024tame}. Recent approaches~\cite{Facil_2019_CVPR,Yin_2023_ICCV,Guizilini_2023_ICCV} have incorporated explicit consideration of camera intrinsics into MDE models. They have shown that camera intrinsic enforces MDE models to implicitly understand camera models from the image appearance and then bridges the imaging size to the real-world size. 
Yet, the effectiveness of them depends on unavailable accurate camera intrinsics.

\begin{figure}[t]
\centering
\includegraphics[width=1.0\linewidth]{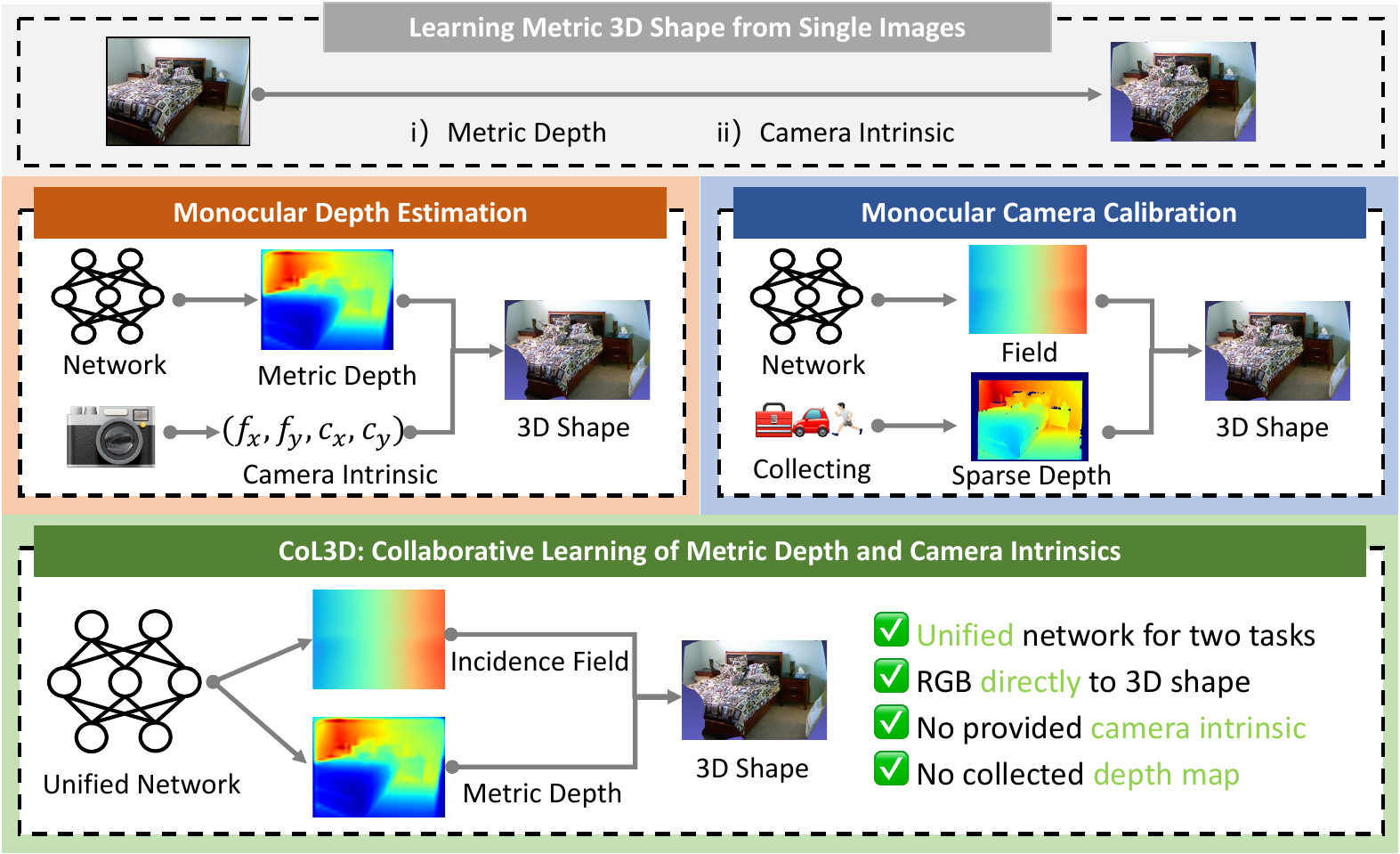}
\vspace{-5mm}
\caption{Comparison of our collaborative learning framework with single-task monocular depth estimation and camera calibration.}
\label{fig:intro_com}
\vspace{-7mm}
\end{figure}

In this study, we explore the reciprocal relations between depth and camera intrinsics from another perspective. We theoretically show that the camera intrinsics can be determined from the depth map given the size of reference objects, which suggests that depth serves as a 3D prior constraint for the estimation of camera intrinsics. These two aspects demonstrate that depth and camera intrinsics are complementary and have a synergistic effect on each other. 

Inspired by this insight, we propose a collaborative learning framework for joint estimation of depth maps and camera intrinsics from a single-view image, named \textbf{CoL3D}. In this framework, the two branches share a unified encoder-decoder network and predict the depth map and the implicit representation of camera intrinsics, \emph{i.e.}, incidence field~\cite{zhu2024tame}, respectively. Fig.~\ref{fig:intro_com} shows the comparison of CoL3D with previous single-task MDE and monocular camera calibration methods. By integrating the two tasks into a unified framework, a metric 3D point cloud can be recovered from a single image without providing additional cues during inference.

Specifically, CoL3D consists of the following two key elements, involving camera calibration and 3D shape recovery.
Firstly, inspired by residual learning, we introduce a canonical incidence field mechanism to promote the model to learn a residual incident field. By setting a prior for the camera intrinsics, we not only reduce the difficulty of intrinsics learning but also render the process from the camera intrinsics to the 3D point cloud completely differentiable. 
Secondly, to alleviate distortions of the recovered 3D point cloud, we further design a shape similarity measurement loss in the point cloud space. By optimizing the scene shape in 3D, we enhance the quality of point clouds derived from predicted depth maps and the incidence field.

Owing to our design, the proposed CoL3D achieves remarkable performance on tasks at various levels. For MDE, our method outperforms state-of-the-art \textit{in-domain} metric depth estimation methods on the popular NYU-Depth-v2~\cite{silberman2012indoor} and KITTI~\cite{geiger2012we} datasets, along with estimating accurate camera intrinsics. In terms of camera calibration, our approach attains comparable performance to the state-of-the-art methods on the Google Street View~\cite{anguelov2010google} and Taskonomy datasets~\cite{zamir2018taskonomy}, while also being capable of predicting reasonable depth maps. Thanks to the outstanding performance on both tasks, our method consistently delivers superior point cloud reconstruction quality on popular datasets.

To summarize, our main contributions are as follows:
\begin{itemize}
\item We reveal the reciprocal relations between depth and camera intrinsics and introduce the CoL3D framework for the collaborative learning of depth maps and camera intrinsics, enabling metric 3D shape recovery from a single-view image within a unified framework.
\item We propose two strategies to empower the model's capabilities at different task levels, including a canonical incidence field for camera calibration and a shape similarity measurement loss for 3D shape recovery.
\item Extensive experiments show that our approach achieves impressive 3D scene shape quality on several benchmark datasets
while estimating accurate depth maps and outstanding camera intrinsics.
\end{itemize}

\section{Related Work}
\textbf{Single-view 3D Recovery.}
Reconstruction of 3D objects from single images has seen notable progress~\cite{barron2014shape,wang2018pixel2mesh,wu2018learning,popov2020corenet}, delivering intricate models for items like vehicles, furniture, and the human form~\cite{saito2019pifu,saito2020pifuhd}. However, the dependence on object-centric 3D learning priors restricts these techniques to full scene reconstruction for robotics applications, such as autonomous navigation and robotic manipulation. Earlier scene reconstruction methods~\cite{saxena2008make3d} segmented scenes into planar segments to approximate 3D architecture. More recently, MDE has been adopted for 3D shape recovery. LeReS~\cite{yin2021learning} incorporates a point cloud module to deduce focal length but necessitates extensive 3D point cloud data for training, particularly challenging for outdoor environments. Meanwhile, GP2~\cite{patakin2022single} introduces a scale-invariant loss to foster depth maps that conserve geometry, but it fails to ascertain focal length. In contrast, our approach focuses on recovering metric 3D scene structure in indoor and outdoor scenarios through a unified framework.

\textbf{Monocular Metric Depth Estimation.}
CNN-based methods predominantly address MDE as a dense regression task~\cite{eigen2014depth,yuan2022neural,liu2023va,piccinelli2023idisc} or a combined regression-classification task through various binning strategies~\cite{adabins,binsformer,localbins,iebins}. The transition to vision transformers has notably enhanced performance~\cite{yang2021transformer,ranftl2021vision,li2023depthformer}. 
Beyond architectural innovation, another line of work~\cite{bhat2023zoedepth,guizilini2023towards,depthanything} focuses on fine-tuning on the metric depth estimation task by using the relative depth estimation pre-trained model as the cornerstone. These methods continue to improve the benchmark results by leveraging massive training data and powerful pre-trained models. In contrast, we reveal the complementary relationship between depth and camera intrinsics. Our approach, demonstrated through in-domain evaluation using a single dataset, allows for better application to customized datasets and scenes.

\textbf{Single Image Camera Calibration.}
Traditionally, camera calibration relied on reference objects like planar grids~\cite{zhang2000flexible} or 1D objects~\cite{zhang2004camera}. Follow-up studies~\cite{schindler2004atlanta,xu2013minimum,wildenauer2012robust,deutscher2002automatic}, operating under the Manhattan World assumption~\cite{coughlan1999manhattan}, have used image line segments~\cite{von2008lsd,akinlar2011edlines} that meet at vanishing points to deduce intrinsic properties. Recent learning-based techniques~\cite{hold2018perceptual,lee2021ctrl,lee2020neural} loosen these constraints by training on panorama images with known horizon and vanishing points to model intrinsic as 1 DoF camera. A notable trend uses the perspective field~\cite{jin2023perspective} or incidence field~\cite{zhu2024tame} to estimate camera intrinsics with 3 DoF or 4 DoF, respectively. In this work, we take a further step and explore the collaborative learning of depth maps and camera intrinsics utilizing the incident field as a bridge.

\textbf{Combination of Depth and Intrinsics.}
Recent studies~\cite{Facil_2019_CVPR,Yin_2023_ICCV,Guizilini_2023_ICCV} have revisited depth estimation by explicitly incorporating camera intrinsics, particularly focal length, as additional input to learn metric depth. However, focal length is often inaccessible during deployment. The challenge lies in how to jointly learn depth and intrinsics for the accurate recovery of metric 3D shapes. Note that, UniDepth~\cite{piccinelli2024unidepth} addresses this by leveraging considerable and diverse datasets and large-scale backbones. In contrast, in our \textit{in-domain} training and testing settings, we explore the reciprocal relations between depth and camera intrinsics and also achieve impressive performance on \textit{a single dataset}, which offers flexibility to meet various customized requirements.

\begin{figure*}[t]
\begin{center}
\includegraphics[width=1.0\textwidth]{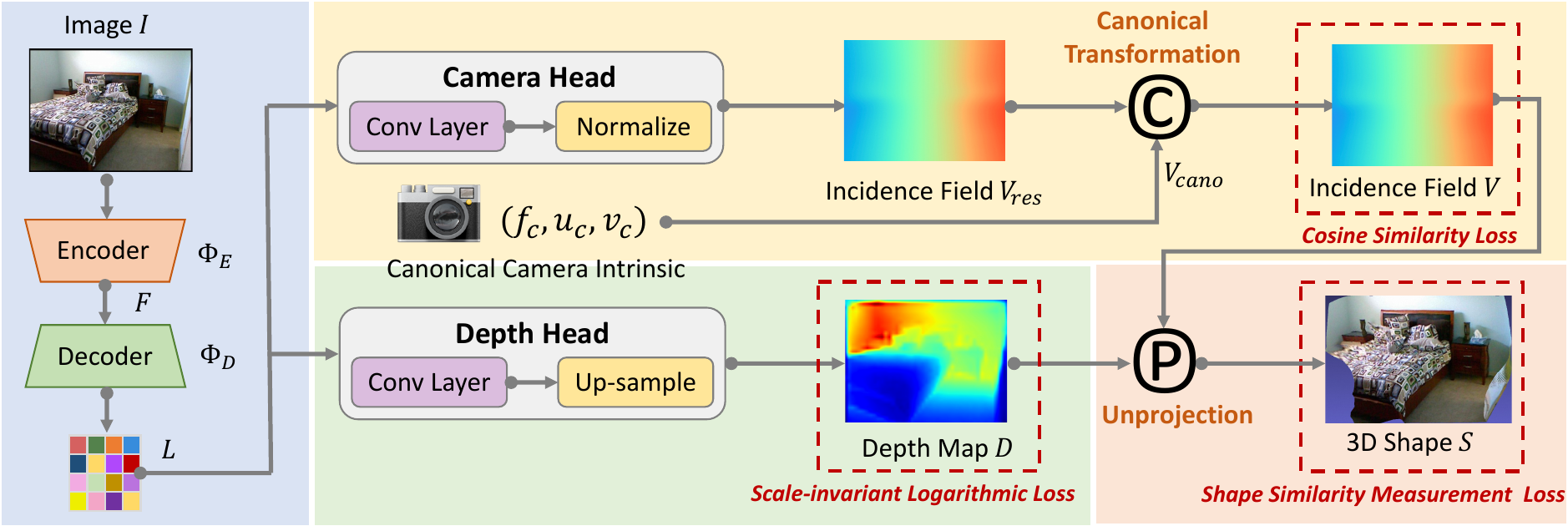}
\end{center}
\vspace{-5mm}
\caption{Overview of the proposed CoL3D framework. It consists of an Encoder and Decoder for latent feature extraction, a Depth Head for depth prediction, and a Camera Head for camera intrinsics estimation. Collaborative learning is performed on the depth map, the incident field, and the 3D point cloud. Note that camera intrinsics are only used for training and are predicted by the model itself at inference.}
\label{fig:overview}
\vspace{-3mm}
\end{figure*}

\section{Preliminary}
\textbf{Problem Statement.}
In this study, we focus on collaborative learning of monocular depth and camera intrinsics to recover a metric 3D shape. We assume a standard camera model for the 3D point
cloud reconstruction, which means that the unprojection from 2D coordinates and depth to 3D points is:
\begin{equation}
\label{eq:main}
x = \frac{u-c_x}{f_x} d, y = \frac{v-c_y}{f_y} d, z = d,
\end{equation}
where $f_x$ and $f_y$ are the pixel-represented focal length along the $x$ and $y$ axes, $(c_x, c_y)$ is the principle center, and $d$ is the depth. The focal length affects the point cloud shape as it scales $x$ and $y$ coordinates. Similarly, a shift of $d$ will result in shape distortions. Previous works~\cite{Yin_2023_ICCV,Guizilini_2023_ICCV} have shown the guiding role of camera intrinsics on depth estimation, and we demonstrate that depth serves as a 3D prior constraint on camera intrinsics estimation through the following proposition.
\begin{proposition}
Given the depth map of an image, the 4 DoF camera intrinsics can be determined by 4 non-overlapping groups of pixels in the image with their Euclidean distances in the 3D space.
\end{proposition}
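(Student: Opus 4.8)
The plan is to turn each group's datum into one scalar polynomial equation in the four unknowns $(f_x,f_y,c_x,c_y)$ and then to argue that four such equations, coming from four disjoint groups in general position, pin them down. Concretely, I would take a group consisting of two pixels $(u_1,v_1),(u_2,v_2)$ with measured depths $d_1,d_2$ and known $3$D separation $D$ (a larger group still supplies, through one of its inter-pixel distances, a constraint of the same form), unproject both through Eq.~\eqref{eq:main}, and abbreviate $\alpha=u_1d_1-u_2d_2$, $\beta=v_1d_1-v_2d_2$, $\delta=d_1-d_2$. Then the requirement $\|P_1-P_2\|^2=D^2$ becomes
\begin{equation}
\frac{(\alpha-c_x\delta)^2}{f_x^2}+\frac{(\beta-c_y\delta)^2}{f_y^2}=D^2-\delta^2,
\end{equation}
in which every quantity except $(f_x,f_y,c_x,c_y)$ is known; this is the single constraint attached to that group.

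The second step is a linearization. Setting $p=f_x^{-2}$, $q=f_y^{-2}$, $r=c_xf_x^{-2}$, $s=c_yf_y^{-2}$, and $\sigma=c_x^2f_x^{-2}+c_y^2f_y^{-2}$, the displayed equation turns into
\begin{equation}
\alpha^2p+\beta^2q-2\alpha\delta\,r-2\beta\delta\,s+\delta^2\sigma=D^2-\delta^2,
\end{equation}
which is linear in $(p,q,r,s,\sigma)$, at the cost of one bookkeeping identity $\sigma\,pq=r^2q+s^2p$ encoding that the five new variables come from only four old ones. The four groups therefore give a linear system in five unknowns whose solution set is generically a line; substituting that parametrization into the bookkeeping identity yields a single cubic equation in one parameter, so there are finitely many candidate solutions, and the true intrinsics is among them.

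The third step is to select the correct solution and record the genericity hypotheses. I would impose $f_x,f_y>0$ (i.e.\ $p,q>0$) and the consistency of $c_x=r/p$, $c_y=s/q$ with $\sigma$, and argue that for four groups in general position these conditions isolate the right root, after which reading off $(f_x,f_y,c_x,c_y)$ is immediate. It is worth spelling out a transparent special case as an explicit construction: choose the first two groups as pixel pairs at \emph{equal} depth ($\delta=0$), so their equations reduce to $d^2[(\Delta u)^2p+(\Delta v)^2q]=D^2$ and recover $f_x,f_y$ from a $2\times2$ linear system; the remaining two groups, with $f_x,f_y$ now fixed, each cut out an ellipse in the $(c_x,c_y)$-plane, and their intersection (disambiguated by the principal point lying near the image center) fixes the principal point.

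The main obstacle is not deriving the equations but the non-degeneracy bookkeeping around uniqueness: I have to state precisely which configurations are excluded --- for instance $\delta_i\neq0$ where it is needed, the four linear forms independent, the chosen pixels not all lying on a plane through the optical center --- verify that these hold for generic disjoint groups (which is where the ``non-overlapping'' hypothesis enters), and show that the spurious roots of the final univariate polynomial are removed by the positivity and consistency constraints rather than by an ad hoc argument. Extending everything to groups of more than two pixels, with the attendant reading of ``Euclidean distance'', is a routine variation on the same template.
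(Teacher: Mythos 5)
Your per-group constraint is exactly the paper's: unprojecting the two pixels and imposing the known 3D separation gives
\begin{equation}
\frac{(\alpha-c_x\delta)^2}{f_x^2}+\frac{(\beta-c_y\delta)^2}{f_y^2}=L^2-\delta^2,
\end{equation}
which coincides with the paper's rearranged distance equation. Where you genuinely diverge is in how the resulting $4\times 4$ system is argued to determine the intrinsics. The paper re-parametrizes with $t_x=c_x/f_x$, $t_y=c_y/f_y$, $r_x=1/f_x$, $r_y=1/f_y$, keeps the four equations quadratic in these four unknowns, and simply declares a minimal solver computed by Levenberg--Marquardt; it neither bounds the number of solutions nor addresses uniqueness. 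You instead lift to the five variables $(p,q,r,s,\sigma)$ so that each group contributes a \emph{linear} equation, reduce the four linear equations to a one-parameter family, and intersect with the single cubic consistency relation $\sigma pq=r^2q+s^2p$, giving at most three algebraic candidates to be filtered by $p,q>0$ and a plausibility prior on the principal point. This DLT-style lifting buys an explicit, numerically benign solution path and a finite bound on the candidate set, at the cost of one extra variable and the final root-selection argument; the paper's route has tighter bookkeeping (four unknowns, four equations) but leans entirely on an iterative nonlinear solver. Note that your remaining gap --- the genericity/non-degeneracy conditions and the claim that positivity isolates the true root --- is real but no worse than the paper's own proof, which asserts determinacy by counting and ends with ``run LM''; if anything, your version makes explicit which degenerate configurations (e.g.\ dependent linear forms, all $\delta_i=0$ where it matters, a degenerate cubic) must be excluded, whereas the paper leaves them implicit.
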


We provide additional proof in the video attachment. Note that the pixels in the image and their spatial distance generally represent the size and scale of reference objects in the 3D world, like beds or cars.

\textbf{Incidence Field.}
The incidence field~\cite{zhu2024tame} is defined as the incidence rays between points in 3D space and pixels in the 2D imaging plane, which is regarded as a pixel-wise parameterization of camera intrinsics. An incidence ray from a pixel $\mathbf{p^T} = [u\quad v\quad 1]$ in the 2D image space is defined as:
\begin{equation}
    \mathbf{v^T} = [(u-c_x)/f_x\quad (v-c_y)/f_y\quad 1].
\end{equation}
The incidence field $\mathbf{V}$ is determined by the collection of incidence rays associated with each pixel, where $\mathbf{v} = \mathbf{V}(\mathbf{p})$.

\section{Methodology}
Fig.~\ref{fig:overview} shows the overall framework of the proposed CoL3D framework. In the spirit of fully exploring the reciprocal relationship between depth and camera intrinsics, CoL3D achieves knowledge complementarity by sharing the encoder and decoder and employing respective prediction heads. To obtain a better quality of 3D scene shape, we propose the canonical incident field mechanism and the shape similarity measurement loss. The whole framework is optimized at three levels, which are depth, camera, and point cloud. The details are introduced in subsequent sections.

\subsection{Canonical Incidence Field}
\label{sec:4.1}
The elements that compose camera intrinsics usually have specific numerical ranges. For instance, the field of view (FoV) of a standard camera is generally between $40^\circ$ to $120^\circ$, and the optical center is generally near the center of the image. Compared with direct prediction without reference values, setting canonical intrinsic elements as initial values can serve as a prior for incident field learning. Inspired by residual learning~\cite{he2016deep}, we propose to enable the model to learn residuals based on canonical camera intrinsics to reduce the difficulty of incident field learning and thereby improve the performance of camera intrinsics estimation.

We denote the incident field composed of the canonical camera intrinsics elements as \emph{Canonical Incident Field} $\mathbf{V}_{cano}$, which is defined as follows:
\begin{equation}
\mathbf{K}_{cano} = \left[
\begin{array}{ccc}
f_c & 0 & u_c \\
0 & f_c & v_c \\
0 & 0 & 1 \\
\end{array}\right],
\mathbf{V}_{cano}(\mathbf{p}) = \left[
\begin{array}{ccc}
(u - u_c) / f_c \\
(v - v_c) / f_c \\
1 \\
\end{array}\right],
\end{equation}
where $f_c$ represents the canonical focal length along the horizontal and vertical image axes, and $u_c=w/2$ and $v_c=h/2$ represent the coordinates of the canonical principal point. To this end, the Camera Head targets to learn the residual incident field $\mathbf{V}_{res}$ of the ground truth incident field $\mathbf{V}_{gt}$ relative to the canonical incident field $\mathbf{V}_{cano}$. That is to say, $\mathbf{V}_{res} \cdot \mathbf{V}_{cano} = \mathbf{V}_{gt}$.

Using the incident field as an implicit representation of the focal length, the 3D point cloud can be directly obtained from the combination of the incident field with the depth, as illustrated in Eq.~\eqref{eq:main}.
In this way, we achieve full differentiability from the focal length to the 3D point cloud.

\subsection{Shape Similarity Measurement}
\label{sec:4.2}
Typically, evaluation metrics for MDE usually measure the per-pixel estimation error, but cannot evaluate the overall quality of the 3D scene shape. Minor errors within the depth maps may be amplified when converted into 3D space, which may subsequently lead to scene shape distortion. It is a critical problem for downstream tasks such as 3D view synthesis and 3D photography. Potential reasons include depth discontinuities, uneven error distribution, and inaccurate camera intrinsics. 

To improve the quality of the recovered 3D shape, we propose a 3D shape similarity measurement mechanism, aiming to collaboratively optimize the depth map and camera intrinsics in the point cloud space. Specifically, we employ the Chamfer Distance~\cite{borgefors1988hierarchical} as the point cloud similarity metric to calculate the distance between predicted and ground truth 3D point clouds as follows:
\begin{equation}
\mathcal{M}(\mathcal{P}, \mathcal{Q}) = \frac{1}{|\mathcal{P}|} \sum_{p \in \mathcal{P}} \min_{q \in \mathcal{Q}} |p - q|^2 + \frac{1}{|\mathcal{Q}|} \sum_{q \in \mathcal{Q}} \min_{p \in \mathcal{P}} |q - p|^2,
\end{equation}
where $\mathcal{P}$ and $\mathcal{Q}$ represent the sets of points in the predicted and ground truth point clouds, respectively, and $|p - q|$ denotes the Euclidean distance between points $p$ and $q$. This metric effectively measures the average closest point distance between the two point clouds, which has fully differentiable properties for comprehensive 3D shape optimization. 

\subsection{Collaborative Learning Protocol}
\label{sec:4.3}
\textbf{Architecture.}
The proposed CoL3D framework consists of an Encoder Backbone $\Phi_E$, a Decoder Module $\Phi_D$, a Depth Head $\phi_d$, and a Camera Head $\phi_c$ (see Fig.~\ref{fig:overview}). Given an RGB image $\mathbf{I} \in \mathcal{R}^{h \times w \times 3}$ with $w$ and $h$ representing the width and height of the image, we adopt the Swin-Transformer~\cite{liu2021swin} as the encoder, producing features at different scales, \emph{i.e.}, $\mathbf{F} \in \mathcal{R}^{h \times w \times C \times B}$, where $B=4$. The latent feature tensor is obtained as the average of the features $\mathbf{F}$ along the $B$ dimension. The decoder is inspired from iDisc~\cite{piccinelli2023idisc} and is fed with the latent feature, yielding the decoded features $\mathbf{L} \in \mathcal{R}^{h \times w \times C}$. Furthermore, the Depth Head and Camera Head take the decoded features $\mathbf{L}$ as input and estimate the depth map $\mathbf{D} \in \mathcal{R}^{h \times w}$ and incident field $\mathbf{V} \in \mathcal{R}^{h \times w \times 3}$, respectively. The Depth Head consists of a convolutional layer followed by an upsampling layer while the Camera Head changes the Depth Head to output a three-dimensional normalized incident field. The metric 3D shape $\mathbf{S} \in \mathcal{R}^{h \times w \times 3}$ is recovered by the unprojection from the predicted depth map and incidence field.

\textbf{Optimization.}
Collaborative learning is performed at the depth level, camera level, and point cloud level. Following~\cite{yuan2022neural,piccinelli2023idisc,iebins}, we leverage the scale-invariant logarithmic loss for depth estimation,
\begin{equation}
\mathcal{L}_{silog} = \frac{1}{n} \sum_i (\Delta D_i)^2 - \frac{\lambda}{n^2}(\sum_i \Delta D_i)^2,
\end{equation}
where $\Delta D_i = \log \mathbf{D}_i - \log \mathbf{D^*}_i$. Here, $\mathbf{D}$ is the predicted depth, $\mathbf{D^*}$ is the ground truth depth, both with $n$ pixels indexed by $i$, and $\lambda \in [0,1]$. For incidence field learning, we adopt a cosine similarity loss defined as:
\begin{equation}
\mathcal{L}_{cos} = \frac{1}{n} \sum_i (\mathbf{V}_i \cdot \mathbf{V}_{cano})^T \mathbf{V}_i^*,
\end{equation}
where $\mathbf{V}$ is the predicted incidence field, $\mathbf{V}^*$ is the ground truth incidence field. For metric 3D shape learning, define $\mathbf{S}$ the predicted point cloud with predicted depth $d = \mathbf{D}(u,v)$ and estimated camera intrinsic elements $(\hat{c_x}, \hat{c_y}, \hat{f_x}, \hat{f_y})$ and $\mathbf{S^*}$ the ground truth point cloud with ground truth depth $d^* = \mathbf{D^*}(u,v)$ and ground truth camera intrinsic elements $(c_x^*, c_y^*, f_x^*, f_y^*)$ as:
\begin{equation}
\mathbf{S} := \left\{
\begin{array}{lll}
S_x = \frac{u-\hat{c_x}}{\hat{f_x}} d  \\
S_y = \frac{v-\hat{c_y}}{\hat{f_y}} d  \\
S_z = d \\
\end{array}
\right., \mathbf{S^*} := \left\{
\begin{array}{lll}
S^*_x = \frac{u-c_x^*}{f_x^*} d^*  \\
S^*_y = \frac{v-c_y^*}{f_y^*} d^*  \\
S^*_z = d^* \\
\end{array}
\right..
\end{equation}
We utilize the proposed shape similarity measurement as the loss in 3D space:
\begin{equation}
\mathcal{L}_{cd} = \mathcal{M}(\mathbf{S}, \mathbf{S^*}). 
\end{equation}
The overall loss function is formally defined as follows:
\begin{equation}
\mathcal{L} = \alpha\mathcal{L}_{silog} + \beta\mathcal{L}_{cos} + \gamma\mathcal{L}_{cd},
\end{equation}
where $\alpha$, $\beta$, and $\gamma$ are weight parameters.

\section{Experiments}

\subsection{Experimental Setup}

\textbf{Datasets.}
For MDE, we use three benchmark datasets to evaluate our approach, including NYU-Depth V2 (NYU)~\cite{silberman2012indoor}, KITTI~\cite{geiger2012we}, and SUN RGB-D~\cite{song2015sun} datasets. The NYU dataset is divided into 24,231 samples for training and 654 for testing according to the split by~\cite{lee2019big}. The KITTI dataset follows Eigen-split~\cite{eigen2014depth} with 23,158 training images and 652 testing images. The SUN RGB-D dataset is used for zero-shot generalization study and the official 5,050 test images are adopted. For monocular camera calibration, we adopt the Google Street View (GSV) dataset~\cite{anguelov2010google} for evaluation, which provides 13,214 images for training and 1,333 images for testing. We also utilize Taskonomy~\cite{zamir2018taskonomy} dataset for monocular depth z-buffer prediction and single-view camera calibration tasks. The standard \emph{Tiny} splits are adopted with 24 training buildings (250K images) and 5 validation buildings (52K images).

\textbf{Evaluation Metrics.}
For 3D shape recovery quality, we adopt $F1$ score, Chamfer Distance, and the Locally Scale Invariant RMSE (LSIV) metric in~\cite{chen2020oasis}. For MDE, following previous works~\cite{yuan2022neural,piccinelli2023idisc}, the accuracy under threshold ($\delta_i < 1.25^i, i=1,2,3$), absolute relative error (A.Rel), relative squared error (Sq.Rel), root mean squared error (RMSE), root mean squared logarithmic error (RMSE log), and $\log_{10}$ error ($\log_{10}$) metrics are employed. For camera calibration, we convert the focal length to FoV, calculate the angular error, and report two metrics: the mean error and median error following~\cite{zhu2024tame}.

\textbf{Implementation Details.}

CoL3D is implemented in PyTorch. For architecture, we adopt Swin-Transformer as the Encoder and utilize the Internal Discretization in iDisc as the Decoder. The Depth Head and Camera Head mainly consist of convolutional layers, followed by upsampling and normalization, respectively. For training, we use the AdamW optimizer ($\beta_1=0.9, \beta_2=0.999$) with an initial learning rate of 2e-4, and weight decay set to 0.02. As a scheduler, we exploit Cosine Annealing starting from 30\% of the training, with a final learning rate of 2e-5. We run 45k optimization iterations with a batch size of 16 for all datasets. All backbones are initialized with weights from ImageNet-pretrained models. The required training time amounts to 5 days on 8 V100 GPUs. We set $\lambda=0.5$ and the loss weights $\alpha=1$, $\beta=10$, and $\gamma=1$, respectively.

\begin{table}[htb!]
\caption{\textbf{Comparisons of depth estimation on the NYU dataset}.} 
\label{tab:nyu}
\begin{center}
\footnotesize
\vspace{-0.5cm}
\setlength{\tabcolsep}{1mm}{
\begin{tabular}{l | c c c| c c c}	
\toprule
Method & A.Rel $\downarrow$ & RMSE $\downarrow$ & ${\textbf{\rm{log}}_{\bm{{10}}}}$ $\downarrow$ &  $\delta_1$ $\uparrow$ &  $\delta_2$ $\uparrow$& $\delta_3$ $\uparrow$ \\
\midrule
AdaBins~\cite{adabins} &0.103&0.364&0.044&0.903&0.984&0.997\\
P3Depth~\cite{patil2022p3depth}& 0.104&0.356&0.043&0.898&0.981&0.996\\ 
LocalBins~\cite{localbins}&0.099&0.357&0.042&0.907&0.987&0.998\\ 	
NeWCRFs~\cite{yuan2022neural}&0.095&0.334&0.041&0.922&0.992&0.998\\
BinsFormer~\cite{binsformer}&0.094&0.330&0.040&0.925&0.989&0.997\\ 
IEBins~\cite{iebins}& 0.087 &0.314 &0.038 &0.936 &0.992 & 0.998\\
iDisc~\cite{piccinelli2023idisc} & 0.086 & 0.313 & 0.037 & 0.940 & \textbf{0.993} & \textbf{0.999} \\
Metric3D~\cite{Yin_2023_ICCV} & \textbf{0.083} & 0.310 & \textbf{0.035} & \textbf{0.944} & 0.986 & 0.995\\
Unidepth~\cite{piccinelli2024unidepth} & \textcolor{gray}{0.626} & \textcolor{gray}{0.232} & - & \textcolor{gray}{0.972} & - & -\\
\midrule
\textbf{Ours}  & \textbf{0.083}&\textbf{0.294}&\textbf{0.035}&\textbf{0.944}&0.992&\textbf{0.999}\\
\bottomrule
\end{tabular}}
\end{center}
\vspace{-5mm}
\end{table}

\begin{table}[htb!]
\caption{\textbf{Zero-shot generalization to the SUN RGB-D dataset with models trained on NYU.} The maximum depth is capped at 10m.}
\label{tab:nyu_to_sunrgbd}
\begin{center}
\footnotesize
\vspace{-5mm}
\setlength{\tabcolsep}{1mm}{
\begin{tabular}{l |  c c c| c c c}	
\toprule
Method &  A.Rel $\downarrow$ & RMSE $\downarrow$ & ${\textbf{\rm{log}}_{\bm{{10}}}}$ $\downarrow$ &  $\delta_1$ $\uparrow$ &  $\delta_2$ $\uparrow$& $\delta_3$ $\uparrow$ \\
\midrule
AdaBins~\cite{adabins}&0.159&0.476&0.068&0.771&0.944&0.983\\
LocalBins~\cite{localbins} &0.156&0.470&0.067&0.777&0.949&0.985\\
NeWCRFs~\cite{yuan2022neural}& 0.150 & 0.429 & 0.063 & 0.799 & 0.952 & 0.987\\
BinsFormer~\cite{binsformer}&0.143&0.421&0.061&0.805&0.963&0.990\\ 
IEBins~\cite{iebins}& 0.135&0.405&0.059&0.822&0.971&0.993\\
iDisc~\cite{piccinelli2023idisc} & 0.128 & 0.387 & 0.056 & 0.836 & 0.974 & 0.994\\
\midrule
\textbf{Ours} &\textbf{0.127}&\textbf{0.369}&\textbf{0.055}&\textbf{0.849}&\textbf{0.977}&\textbf{0.995}\\
\bottomrule
\end{tabular}}
\end{center}
\vspace{-5mm}
\end{table}

\begin{table}[htb!]
\caption{\textbf{Comparisons of depth estimation on the Eigen split of KITTI dataset}. The maximum depth is capped at 80m.}
\label{tab:kitti}
\begin{center}
\scriptsize
\vspace{-5mm}
\setlength{\tabcolsep}{0.7mm}{
\begin{tabular}{l |  c c  c c | c c c }	
\toprule
Method & A.Rel $\downarrow$ & Sq.Rel $\downarrow$ & RMSE $\downarrow$ & RMSE$_{\log}$ $\downarrow$ & $\delta_1$ $\uparrow$ & $\delta_2$ $\uparrow$& $\delta_3$ $\uparrow$ \\
\midrule
AdaBins~\cite{adabins}&0.058&0.190&2.360&0.088&0.964&0.995&\textbf{0.999}
\\ 
P3Depth~\cite{patil2022p3depth}&0.071&0.270&2.842&0.103&0.953&0.993&0.998
\\
NeWCRFs~\cite{yuan2022neural}&0.052&0.155&2.129&0.079&0.974&0.997&\textbf{0.999}
\\
BinsFormer~\cite{binsformer}&0.052&0.151&2.098&0.079&0.974&0.997&\textbf{0.999}
\\
Metric3D~\cite{Yin_2023_ICCV} & 0.053 & 0.174 & 2.243 & 0.087 & 0.968 & 0.996 & \textbf{0.999} \\
iDisc~\cite{piccinelli2023idisc} & \textbf{0.050} & 0.145 & 2.067 & 0.077 & 0.977 & 0.997 & \textbf{0.999} \\
IEBins~\cite{iebins} & \textbf{0.050} & 0.142 & 2.011 &0.075 &\textbf{0.978} &\textbf{0.998} &\textbf{0.999}\\
Unidepth~\cite{piccinelli2024unidepth} & \textcolor{gray}{0.469} & - & \textcolor{gray}{2.000} & \textcolor{gray}{0.072} & \textcolor{gray}{0.979} & - & - \\
\midrule
\textbf{Ours} & \textbf{0.050}&\textbf{0.140}&\textbf{2.002}&\textbf{0.073}&\textbf{0.978}&\textbf{0.998}&\textbf{0.999} \\
\bottomrule
\end{tabular}}
\end{center}
\vspace{-5mm}
\end{table}

\textbf{Comparison Protocols.} To ensure a fair comparison, we select the state-of-the-art methods that use similar in-domain settings, meaning their training and testing are all conducted on a single dataset. It is worth mentioning that many current models are exploring training on larger datasets with more complex architectures. While we acknowledge that they may perform better in certain cases, their training schemes differ significantly from ours. Our focus is how depth and camera intrinsics can complement each other within in-domain settings, which offer flexibility for customized requirements.

\subsection{Depth Estimation}

Table~\ref{tab:nyu} compares our CoL3D method with in-domain metric depth estimation methods on NYU. CoL3D improves by over 6\% on RMSE and 3\% on A.Rel compared to previous methods. Our method also shows versatility with remarkable depth estimation performance and a mean FoV($^\circ$) error of 0.71. However, there is still a gap compared to depth estimation foundation models like Unidepth~\cite{piccinelli2024unidepth}, which use large-scale datasets.
Tab.~\ref{tab:nyu_to_sunrgbd} presents zero-shot generalization comparisons on SUN RGB-D with models trained on NYU. We achieve the best generalization performance compared to other methods, which suggests that the proposed framework captures better geometric structures in indoor scenes.

\begin{table}[htb!]
\caption{\textbf{Effectiveness of key components on Taskonomy-Tiny}.}
\label{tab:taskonomy}
\begin{center}
\small
\vspace{-5mm}
\setlength{\tabcolsep}{1.8mm}{
\begin{tabular}{c | c c | c | c }	
\toprule
Method  & RMSE $\downarrow$ & $\delta_1  $ $\uparrow$ &  FoV $\downarrow$ & LSIV $\downarrow$ \\
\midrule
MDE w/o Camera Head  & 0.411 & 0.913 & - & - \\
Camera Calibration & - & - & 1.456 & - \\
\midrule
Baseline  & 0.398 & 0.916 & 1.432 & 0.237 \\
Baseline+$\mathbf{V}_{cano}$  & 0.396 & \textbf{0.917} 
 & 1.369 & 0.235 \\
Baseline+$\mathbf{V}_{cano}$+$\mathcal{L}_{cd}$  & \textbf{0.394} & \textbf{0.917} & \textbf{1.342} & \textbf{0.232}\\
\bottomrule
\end{tabular}}
\end{center}
\vspace{-5mm}
\end{table}

\begin{table}[htb!]
\caption{\textbf{Comparisons for monocular camera calibration on GSV}.}
\label{tab:gsv}
\begin{center}
\small
\vspace{-5mm}
\setlength{\tabcolsep}{4mm}{
\begin{tabular}{l | c | c }	
\toprule
Method & Mean $\downarrow$ & Median $\downarrow$ \\
\midrule
Upright~\cite{lee2013automatic} & 9.47 & 4.42\\
Perceptual~\cite{hold2018perceptual} & 4.37 & 3.58\\
CTRL-C~\cite{lee2021ctrl} & 3.59 & 2.72 \\
Perspective~\cite{jin2023perspective} & 3.07 & 2.33\\
\midrule
Ours w/o Asm. & 2.60 & 2.07\\
Ours w Asm. & \underline{2.58} & \underline{2.03}\\
\midrule
Incidence~\cite{zhu2024tame} & \textbf{2.49} & \textbf{1.96} \\
\bottomrule
\end{tabular}}
\end{center}
\vspace{-5mm}
\end{table}

\begin{table}[htb!]
\caption{\textbf{Comparisons of 3D shape quality on the NYU dataset}.}
\scriptsize
\label{tab:nyu_3d_metric}
\begin{center}
\vspace{-5mm}
\setlength{\tabcolsep}{0.5mm}{
\begin{tabular}{l | c c c c c  | c }	
\toprule
Method & $\mathbf{F1}_{0.05}$ $\uparrow$ & $\mathbf{F1}_{0.1}$ $\uparrow$  & $\mathbf{F1}_{0.3}$ $\uparrow$ & $\mathbf{F1}_{0.5}$ $\uparrow$ & $\mathbf{F1}_{0.75}$ $\uparrow$ & $\mathbf{D}_{Cham}$ $\downarrow$ \\
\midrule
BTS~\cite{lee2019big} & 24.5 & 47.0 &  84.4 & 93.6 & 97.2 & 0.169 \\
AdaBins~\cite{adabins} & 24.0 & 47.0 &  84.7 & 94.0 & 97.4 & 0.163 \\ 		
NeWCRFs~\cite{yuan2022neural} &25.5 & 48.6 &  85.4 & 94.4 & 97.6 & 0.156 \\
iDisc~\cite{piccinelli2023idisc} &27.8 & 52.0 &  87.8 & 95.5 & 98.1 & 0.131 \\
IEBins~\cite{iebins} & 28.0 & 52.2 &  88.1 & 95.6 & 98.3 & 0.128 \\
\hline
\textbf{Ours} & \textbf{28.5} & \textbf{52.9}   & \textbf{88.3} & \textbf{96.1} & \textbf{98.7} & \textbf{0.120}\\
\bottomrule
\end{tabular}}
\end{center}
\vspace{-5mm}
\end{table}

The comparison results on the KITTI dataset shown in Tab.~\ref{tab:kitti} further verify the scalability and advantages of our method in outdoor scenes, pushing already low RMSE to a lower level while realizing a mean FoV($^\circ$) error of 1.42 for camera calibration. We claim that the merit of our method lies in its ability to additionally estimate useful camera intrinsics while predicting accurate depths. 
We provide depth visualization comparisons in the video attachment.

\subsection{Camera Calibration}
To evaluate the accuracy of our recovered camera intrinsics, we perform experiments on Taskonomy-Tiny~\cite{zamir2018taskonomy}, which provides ground-truth depth and diverse camera intrinsics satisfying the data requirements. We parse the intrinsics
from the provided camera location, camera pose, an FoV. Tab.~\ref{tab:taskonomy} shows the performance comparison between our collaborative learning framework and each individual task. Our method significantly improves the camera calibration performance compared to performing calibration alone.

Furthermore, we compare the focal length estimation performance on the popular Google Street View benchmark following~\cite{lee2021ctrl}. Note that we employ the off-the-shelf MDE model~\cite{Yin_2023_ICCV} with accurate camera intrinsics involved in GSV to predict depth maps as depth pseudo-labels for collaborative learning since GSV does not provide depth labels. The results in Table~\ref{tab:gsv} demonstrate that our unified framework outperforms most state-of-the-art single-task camera calibration methods. Notably, even when trained with noisy depth pseudo-labels, our approach retains the performance of the Incidence Field method~\cite{zhu2024tame} on camera calibration, while additionally delivering valuable estimated depth maps.

\begin{figure}[htb!]
\begin{center}
\includegraphics[width=0.98\linewidth]{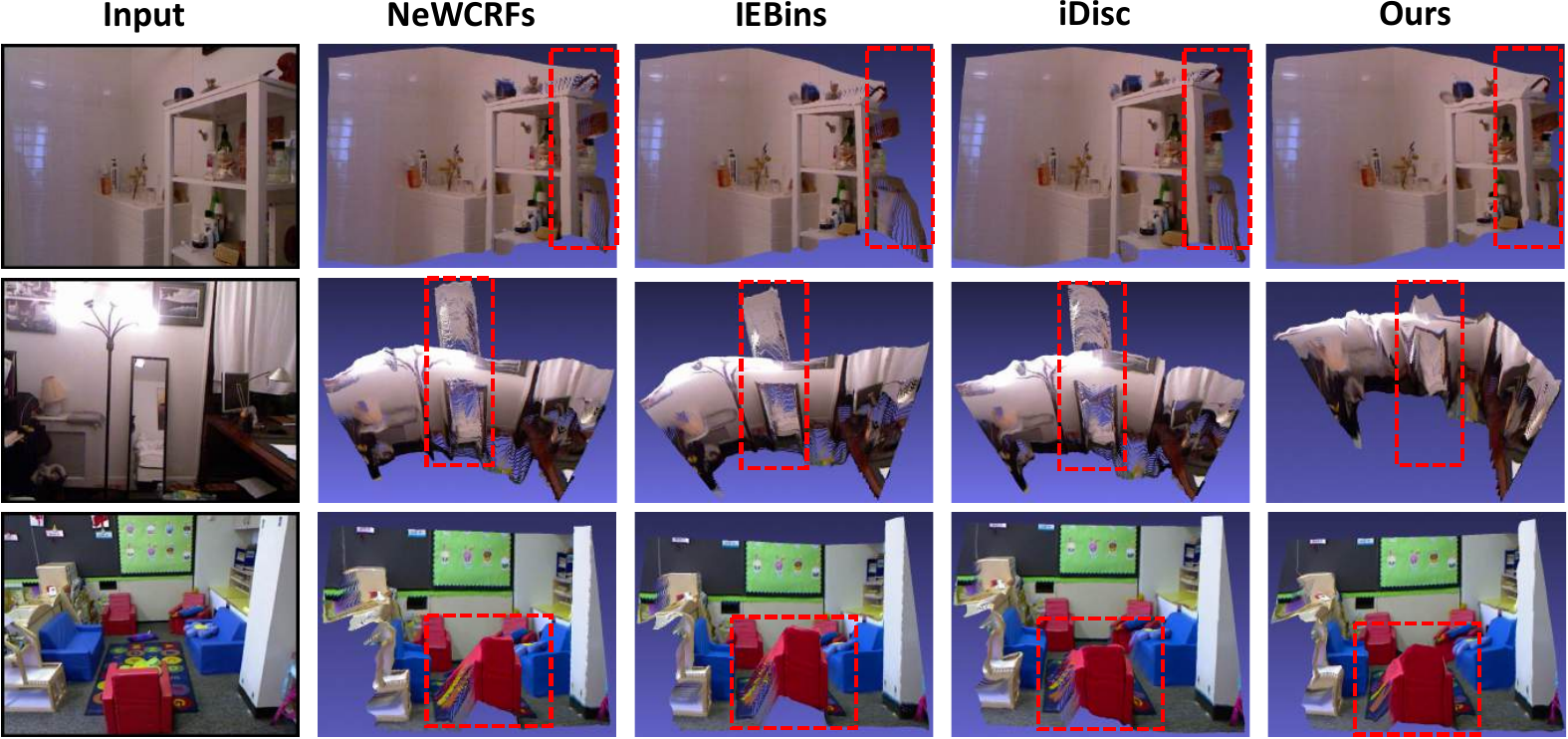}
\end{center}
\vspace{-3mm}
\caption{\textbf{Qualitative 3d shape comparison on the NYU dataset.} The red boxes indicate the regions
to focus on.}
\label{fig:nyu_shape_vis}
\vspace{-3mm}
\end{figure}

\begin{figure}[htb!]
\setlength{\abovecaptionskip}{-0.0cm}
\begin{center}
\includegraphics[width=0.98\linewidth]{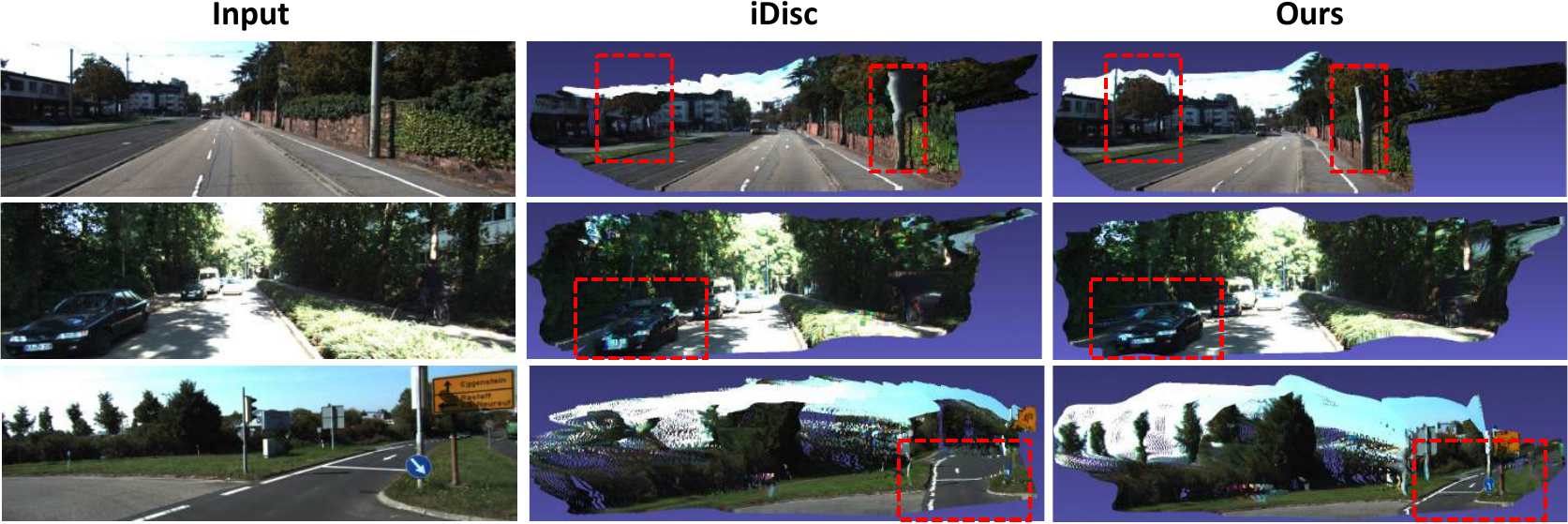}
\end{center}
\vspace{-3mm}
\caption{\textbf{Qualitative 3D shape comparison on the KITTI dataset.} The red boxes show the regions to focus on.}
\label{fig:kitti_shape_vis}
\vspace{-3mm}
\end{figure}

\subsection{3D Shape Recovery}
Tab.~\ref{tab:nyu_3d_metric} shows the performance comparison results of 3D shape recovery quality on NYU with other single-task MDE methods. We report 3D metrics including $F1$ score under various thresholds and Chamfer Distances on point clouds. Our method surpasses previous methods and achieves better results on all metrics. Fig.~\ref{fig:nyu_shape_vis} shows the qualitative point cloud comparison on NYU, where competing methods use additionally provided camera intrinsics for 3D shape recovery while we utilize our own estimated intrinsics. One can observe that our reconstructions have much less noise and outliers even with predicted intrinsics.
We present qualitative point cloud visualization comparison results on the Eigen-split of KITTI n Fig.~\ref{fig:kitti_shape_vis}. As can be seen, the proposed method shows less distortion than the compared approaches and recovers the structures of the 3D world reasonably.

\subsection{Ablation Study}

\textbf{Effectiveness of Key Components.}
Tab.~\ref{tab:nyu_ablation} shows the effectiveness of proposed components on NYU. We employ the naive combination of depth estimation and incident field estimation as the baseline (Row 2), which exhibits a performance decline in depth estimation. When equipped with the proposed canonical incident field $\mathbf{V}_{cano}$ (Row 3), one can observe a significant drop in FoV, which validates the effectiveness of our providing priors for incident field learning thus improving the performance of camera calibration. When adding the optimization in the 3D space (Row 4), \emph{i.e.}, $\mathcal{L}_{cd}$, the LSIV metric is further improved, which shows how point cloud optimization can help enhance 3D shape recovery. Overall, the ablation results show the effectiveness of the proposed strategies in 2D and 3D spaces.

\begin{table}[htb!]
\caption{\textbf{Ablation study of key components on NYU}.}
\label{tab:nyu_ablation}
\begin{center}
\small
\vspace{-5mm}
\setlength{\tabcolsep}{2mm}{
\begin{tabular}{c | c c  | c | c }	
\toprule
Method & RMSE $\downarrow$   & $\delta_1  $ $\uparrow$  & FoV $\downarrow$ & LSIV  $\downarrow$\\
\midrule
w/o Camera Head &  0.295 & 0.941  & - & - \\
\midrule
Baseline &  0.307 & 0.938  & 0.731 & 0.082 \\
Baseline+$\mathbf{V}_{cano}$ & 0.296  & 0.943  & 0.713 & 0.078 \\
Baseline+$\mathbf{V}_{cano}$+$\mathcal{L}_{cd}$  & \textbf{0.294}  & \textbf{0.944} & \textbf{0.709} & \textbf{0.074}\\
\bottomrule
\end{tabular}}
\end{center}
\vspace{-5mm}
\end{table}

\begin{table}[htb!]
\caption{\textbf{Comparisons of model parameters and inference time}.}
\label{tab:modelparam_time}
\vspace{-5mm}
\begin{center}
\small
\setlength{\tabcolsep}{3mm}{
\begin{tabular}{l | c |  c  c  c  }	
\toprule
Method &  $\mathcal{D}_{Chamfer}$ $\downarrow$ & Param(M) $\downarrow$ & Time(s) $\downarrow$\\
\midrule
NeWCRFs & 0.156 & 270 & \textbf{0.052} \\
IEBins & 0.128 & 273 & 0.085 \\
iDisc & 0.131 & \textbf{209} & 0.121  \\
\midrule
ours & \textbf{0.120} & 212 & 0.132  \\
\bottomrule
\end{tabular}}
\end{center}
\vspace{-5mm}
\end{table}

\begin{figure}[htb!]
\setlength{\abovecaptionskip}{-0.0cm}
\begin{center}
\includegraphics[width=1.0\linewidth]{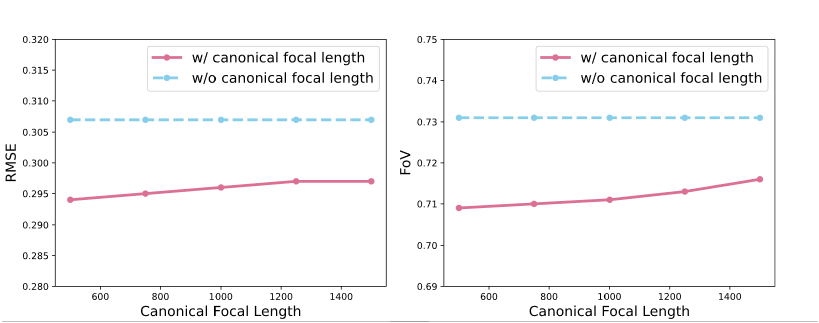}
\end{center}
\vspace{-3mm}
\caption{\textbf{Effect of canonical focal length on NYU dataset.}}
\label{fig:nyu_focal_length}
\vspace{-5mm}
\end{figure}

\textbf{Canonical Focal Length.}
We explore the impact of different canonical focal lengths that construct the canonical incidence field in our framework. Fig.~\ref{fig:nyu_focal_length} shows the results in terms of depth, focal length, and 3D shape on NYU. One can observe that the proposed canonical incident field is not sensitive to the canonical focal length. Although the performance declines slightly as the canonical focal length increases, all the metrics are still much better than not utilizing canonical focal length.

\subsection{Model Parameters and Inference Time}
Tab.~\ref{tab:modelparam_time} shows the comparison results of inference time and model parameters between the proposed method with other in-domain MDE methods using the Swin-Large backbone on the NYU dataset. It can be seen that the inference time of our method is slightly longer since it requires predicting the camera intrinsics while estimating depth. Nevertheless, our model parameters account for less than 80\% of IEBins and NeWCRFs. Meanwhile, the proposed method achieves the best 3D shape recovery quality even with the estimated camera intrinsics. Hence, our method provides a better balance between performance, number of parameters, and inference time.

\section{Conclusion and Future Work}
In this study, we reveal the reciprocal relations between depth and camera intrinsics and introduce a collaborative learning framework that jointly estimates depth maps and camera intrinsics from a single image. We propose a canonical incidence field mechanism and a shape similarity measurement loss thus achieving impressive performance on 3D shape recovery. Our CoL3D framework outperforms state-of-the-art in-domain MDE methods under the single-dataset setting while realizing outstanding camera calibration ability. In future work, we aim to expand our method to include training and evaluation on larger and more diverse datasets.




\bibliographystyle{IEEEtran}
\bibliography{IEEEfull}

\clearpage

\newpage
\section*{APPENDIX}
\section{Proof of Proposition}
\label{app:proof}
In this study, we explore the reciprocal relations between depth and camera intrinsics. Previous works~\cite{Facil_2019_CVPR,Yin_2023_ICCV,Guizilini_2023_ICCV} have shown that camera intrinsic enforces MDE models to implicitly understand camera models from the image appearance and then bridges the imaging size to the real-world size. This validates the guiding effect of camera intrinsics on the depth map. As a supplement from another perspective, we claim that depth serves as a 3D prior constraint on camera intrinsics estimation, which is revealed through the following proposition and proof. These two aspects demonstrate that depth and camera intrinsics are complementary and have a synergistic effect on each other.

\begin{proposition}
Given the depth map of an image, the 4 DoF camera intrinsics can be determined by 4 non-overlapping groups of pixels in the image with their Euclidean distances in the 3D space.
\end{proposition}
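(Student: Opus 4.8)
The plan is to choose the four pixel groups so that the resulting system of range equations almost decouples --- first isolating the two focal lengths, then the principal point --- and to argue that the residual finite ambiguity is removed by the physical constraints on a real camera. Throughout, a ``group'' is a pair of pixels $\{p_i,p_j\}$, $p_i=(u_i,v_i)$, whose $3$D Euclidean distance $D_{ij}$ is given, and I write $\delta_{ij}:=d_i-d_j$ for the depth difference, with $d_i,d_j$ read off the depth map. Unprojecting with Eq.~\eqref{eq:main} and expanding $\|P_i-P_j\|^2$ gives, for each group, one scalar equation
\begin{equation}
\label{eq:dist}
D_{ij}^2=\frac{\big(u_id_i-u_jd_j-c_x\delta_{ij}\big)^2}{f_x^2}+\frac{\big(v_id_i-v_jd_j-c_y\delta_{ij}\big)^2}{f_y^2}+\delta_{ij}^2 .
\end{equation}
Four groups thus give four equations in $(f_x,f_y,c_x,c_y)$; the work is to show they are independent enough to pin the four values down.

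First I would recover the focal lengths from two \emph{fronto-parallel} groups, i.e.\ groups with $\delta_{ij}=0$ --- coplanar points on a wall, a floor, or a road, which are abundant in any real scene. For such a group the terms involving $c_x,c_y$ in \eqref{eq:dist} cancel identically and it reduces to $\big((u_i-u_j)^2/f_x^2+(v_i-v_j)^2/f_y^2\big)d^2=D_{ij}^2$ with $d:=d_i=d_j$, which is \emph{linear} in $(1/f_x^2,1/f_y^2)$. Taking the two groups to have image displacements of different orientation, i.e.\ $(\Delta u_1\Delta v_2)^2\neq(\Delta u_2\Delta v_1)^2$, makes the $2\times2$ linear system nonsingular, and positivity of the focal length then fixes $f_x,f_y$ uniquely via positive square roots.

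With $f_x,f_y$ known, substitute them into \eqref{eq:dist} for the two remaining groups, now chosen with $\delta_{ij}\neq0$. After the linear change of variables $\tilde c_x:=c_x/f_x$, $\tilde c_y:=c_y/f_y$, each such equation becomes
\begin{equation}
\label{eq:circ}
\Big(\tfrac{u_id_i-u_jd_j}{f_x}-\tilde c_x\,\delta_{ij}\Big)^2+\Big(\tfrac{v_id_i-v_jd_j}{f_y}-\tilde c_y\,\delta_{ij}\Big)^2=D_{ij}^2-\delta_{ij}^2,
\end{equation}
which (dividing by $\delta_{ij}^2$) is the equation of a \emph{circle} in the $(\tilde c_x,\tilde c_y)$ plane, with positive radius since generically $D_{ij}^2>\delta_{ij}^2$. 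Two circles meet in at most two points, so the principal point --- hence the full $4$ DoF intrinsics --- is determined up to a single reflection across the line through the two circle centres; the standard prior that the optical centre lies near the image centre (exactly the canonical prior used by CoL3D), or choosing the last two groups so that the spurious reflection falls outside the image frame, selects the true solution.

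The main obstacle is the final step: upgrading ``a finite candidate set containing the true intrinsics'' to genuine determination. I would (i) state precisely the non-degeneracy conditions the four groups must satisfy --- two with $\delta_{ij}=0$ satisfying the displacement condition of the previous paragraph, two with $\delta_{ij}\neq0$ whose circles \eqref{eq:circ} are neither tangent nor concentric --- and (ii) show such a quadruple exists for any scene that is not a single fronto-parallel plane, then eliminate the reflection as above. Checking non-singularity of the $2\times2$ system and transversality of the two circles is routine (discriminant computations on \eqref{eq:dist}--\eqref{eq:circ}) and I would defer it; the conceptual core is the two-stage decoupling and the observation that the principal-point stage is just the intersection of two circles.
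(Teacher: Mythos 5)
Your algebraic starting point is the same as the paper's: unproject with Eq.~\eqref{eq:main}, write one squared-distance equation per pixel pair, and note that the intrinsics enter through $c_x/f_x$, $c_y/f_y$, $1/f_x$, $1/f_y$. From there the routes diverge. The paper keeps the four groups \emph{generic}: it re-parametrizes to $(t_x,t_y,r_x,r_y)=(c_x/f_x,\,c_y/f_y,\,1/f_x,\,1/f_y)$, stacks the four quadratic equations, and simply declares a minimal solver solved by Levenberg--Marquardt, with no analysis of uniqueness or of degenerate configurations. Your proof is more constructive and more informative: by \emph{choosing} two equal-depth pairs you make the focal-length recovery a nonsingular $2\times 2$ linear solve in $(1/f_x^2,1/f_y^2)$, and the principal point then falls out as the intersection of two explicit circles in the $(c_x/f_x,\,c_y/f_y)$ plane, with the non-degeneracy conditions stated rather than swept into ``randomly sampled.'' What the paper's version buys is that it needs no special configuration (equal-depth pairs with known 3D separation may not be among the available reference measurements); what yours buys is an actual solvability argument and explicit genericity conditions instead of an appeal to a numerical optimizer.

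The one genuine gap is the final uniqueness step, and it is structural in your construction, not merely deferred bookkeeping: your two equal-depth groups carry no information about $(c_x,c_y)$ at all, so the full four-equation system you build has, generically, exactly two solutions --- the true principal point and its reflection across the line through the two circle centres --- and no amount of discriminant checking on those four groups removes it. Resolving it by the image-centre prior or by arranging the spurious root off-frame is extra information beyond ``4 groups with their 3D distances,'' so as written you prove determination only up to a two-fold ambiguity plus a prior. To close this within the proposition's hypotheses you would either need at least one of the focal-length groups to also have $d_i\neq d_j$ (so that it cuts the $(\tilde c_x,\tilde c_y)$ plane and breaks the reflection symmetry), or revert to four generic unequal-depth groups as the paper does --- though it is only fair to note that the paper itself never proves uniqueness for its generic system either, so your proposal is, if anything, more honest about where the difficulty lies.
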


\begin{proof}
Assume that the depth map is $\mathbf{D}$, and the 4 groups of pixels and their Euclidean distances in the 3D space are formed as $\{(\mathbf{p}_{i1}, \mathbf{p}_{i2}), \mathbf{L}_{i}\}, i=1,2,3,4$. We denote the intrinsic matrix $\mathbf{K}$ of the camera model and its inverse matrix $\mathbf{K}^{-1}$ as:
\begin{equation}
\mathbf{K} = \left[
\begin{array}{ccc}
f_x & 0 & c_x \\
0 & f_y & c_y \\
0 & 0 & 1 \\
\end{array}\right],
\end{equation}
\begin{equation}
\mathbf{K}^{-1} = \left[
\begin{array}{ccc}
1/f_x & 0 & -c_x/f_x \\
0 & 1/f_y & -c_y/f_y \\
0 & 0 & 1 \\
\end{array}\right],
\end{equation}
where $f_x$ and $f_y$ are the pixel-represented focal length along the $x$ and $y$ axes, and $(c_x, c_y)$ is the principle center. Here, assuming that the camera is in ideal mode with no distortion.

Denote the homogeneous coordinate of a pixel $\mathbf{p^T} = [u\quad v\quad 1]$ in the 2D image space and its depth value $d = \mathbf{D}(\mathbf{p})$, the corresponding 3D point $\mathbf{P^T} = [X\quad Y\quad Z]$ is defined as:
\begin{equation}
\mathbf{P} = d \cdot \mathbf{K}^{-1} \mathbf{p} = d \cdot \left[\begin{array}{ccc}
(u-c_x)/f_x  \\
(v-c_y)/f_y  \\
1  \\
\end{array} \right].
\end{equation}
For a group of pixels $(\mathbf{p}_1, \mathbf{p}_2)$ and their Euclidean distance $L$ in the 3D space, we can get the following constraints:
\begin{equation}
\label{eq:origin}
\begin{aligned}
L^2 &= |\mathbf{P}_1\mathbf{P}_2|^2 \\
    &= \left[\frac{d_1(u_1-c_x)}{f_x} - \frac{d_2(u_2-c_x)}{f_x}\right]^2 \\
    & \quad + \left[\frac{d_1(v_1-c_y)}{f_y} - \frac{d_2(v_2-c_y)}{f_y}\right]^2 \\
    & \quad + (d_1 - d_2)^2.
\end{aligned}
\end{equation}
Arrange Eq.~\eqref{eq:origin}, we obtain:
\begin{equation}
\label{eq:arrange}
\begin{aligned}
& \frac{[d_1u_1-d_2u_2 + (d_2-d_1)c_x]^2}{f_x^2} \\
& + \frac{[d_1v_1-d_2v_2 + (d_2-d_1)c_y]^2}{f_y^2} \\
& + [(d_1 - d_2)^2 -L^2] = 0.
\end{aligned}
\end{equation}
Next, re-parametrize the unknowns in Eq.~\eqref{eq:arrange} to get:
\begin{equation}
\label{eq:target}
\frac{(a_1 + a_2c_x)^2}{f_x^2} + \frac{(a_3 + a_4c_y)^2}{f_y^2} + a_5 = 0,
\end{equation}
where $a_i (i=1,2,3,4,5)$ are constants. Expanding Eq.~\eqref{eq:target}, we obtain:
\begin{equation}
\frac{a_1^2}{f_x^2} + \frac{2a_1a_2c_x}{f_x^2} + \frac{a_2^2c_x^2}{f_x^2} + \frac{a_3^2}{f_y^2} + \frac{2a_3a_4c_y}{f_y^2} + \frac{a_4^2c_y^2}{f_y^2} + a_5 = 0.
\end{equation}
Let $t_x = \frac{c_x}{f_x}, t_y=\frac{c_y}{f_y}, r_x=\frac{1}{f_x}, r_y=\frac{1}{f_y}$, we have:
\begin{equation}
\label{eq:final}
a_1^2r_x^2 + 2a_1a_2t_xr_x + a_2^2t_x^2 + a_3^2r_y^2 + 2a_3a_4t_yr_y + a_4^2t_y^2 + a_5 = 0.
\end{equation}
By stacking Eq.~\eqref{eq:final} with $N = 4$ randomly sampled groups of pixels, we can acquire $N$ nonlinear equations where the intrinsic parameter to be solved is stored in the above 4 unknowns parameters $\{t_x, t_y, r_x, r_y\}$. This solves the other intrinsic parameters as:
\begin{equation}
f_x = \frac{1}{r_x}, f_y=\frac{1}{r_y}, c_x=\frac{t_x}{r_x}, c_y=\frac{t_y}{r_y}.
\end{equation}
If we choose $N=4$, we obtain a minimal solver where the solution is computed by performing the Levenberg-Marquard algorithm and the proof is over.
\end{proof}









\end{document}